\newtheorem{theorem}{Theorem}
\newtheorem{lemma}[theorem]{Lemma}
\title{Can Graph Neural Networks Learn to Solve MaxSAT Problem?}
\author{
    Minghao Liu\textsuperscript{\rm 1,3},
    Fuqi Jia\textsuperscript{\rm 1,3},
    Pei Huang\textsuperscript{\rm 1,3},
    Fan Zhang\textsuperscript{\rm 5},
    Yuchen Sun\textsuperscript{\rm 4},\\
    Shaowei Cai\textsuperscript{\rm 1,3},
    Feifei Ma\textsuperscript{\rm 1,2,3},
    Jian Zhang\textsuperscript{\rm 1,3}
}
\begin{document}

\maketitle

\begin{abstract}
	With the rapid development of deep learning techniques, various recent work has tried to apply graph neural networks (GNNs) to solve NP-hard problems such as Boolean Satisfiability (SAT), which shows the potential in bridging the gap between machine learning and symbolic reasoning. However, the quality of solutions predicted by GNNs has not been well investigated in the literature. In this paper, we study the capability of GNNs in learning to solve Maximum Satisfiability (MaxSAT) problem, both from theoretical and practical perspectives. We build two kinds of GNN models to learn the solution of MaxSAT instances from benchmarks, and show that GNNs have attractive potential to solve MaxSAT problem through experimental evaluation. We also present a theoretical explanation of the effect that GNNs can learn to solve MaxSAT problem to some extent for the first time, based on the algorithmic alignment theory.
\end{abstract}

\section{Introduction}
The propositional logic has long been recognized as one of the corner stones of reasoning in philosophy and mathematics \cite{biere2009handbook}. The satisfiability problem of propositional logic formulas (SAT) has significant impact on many areas of computer science and artificial intelligence. SAT is the first problem proved to be NP-complete \cite{cook1971complexity}, requiring the worst-case exponential time to be solved unless P=NP. The Maximum Satisfiability problem (MaxSAT) is a generalization of SAT, which aims to optimize the number of satisfied clauses, and it is also an NP-hard problem with a wide range of applications \cite{carlos2013maxsat}. There have been tremendous efforts in solving SAT and MaxSAT efficiently, and numerous solvers have been developed in the last few decades such as \cite{cai2021sat}. Nevertheless, these off-the-shelf solvers are mainly based on search algorithms with elaborate hand-crafted strategies. Thanks to recent advances in deep representation learning, particularly for non-euclidean structure, there have been initial efforts to represent and solve combinatorial problems such as SAT through data-driven approaches. It relies on the fact that the distribution of problem instances in practice is generally domain-specific, which means that we can solely replace the training data to obtain efficient implicit heuristics instead of carefully modifying the strategies for certain domain.

The current application of deep learning to SAT problem can be grouped into two categories. One is to build an end-to-end model that inputs a problem instance and directly predicts the satisfiability or solution, as demonstrated in Figure \ref{fig1}. A pioneering work is NeuroSAT \cite{selsam2018learning}, which shows that graph neural networks (GNNs) have capability to learn the satisfiability of SAT instances in specific domain (i.e., whether a group of assignments to the variables exists such that all the clauses are satisfied). \cite{chris2020predicting} further explores the performance of GNN models on predicting the satisfiability of random 3SAT problems which are still challenging for the state-of-the-art solvers. The other category combines neural networks with the traditional search frameworks, trying to improve the effects of some key heuristics. For example,  GNNs are used to replace the variable selection function in the WalkSAT solver \cite{nips2019learning},  produce the initialization assignment of local search \cite{zhang2020nlocalsat}, or guide the search by predicting the variables in the unsat core \cite{selsam2019neurocore}. These efforts indicate that GNN models are believed to have a certain degree of capability for learning from SAT problem instances, and also the potential to help improve SAT solving techniques in the future.

Nevertheless, the current research barely discusses the quality of solution predicted by the models, which should be an important indicator to check what GNNs learn from SAT problems. 
The end-to-end models mentioned above mainly focus on the prediction of satisfiability.
For a satisfiable instance, the predicted solution given by models is infeasible most of the time, and we also have no idea how far it is from a feasible solution.
In order to better understand the capability of GNN models, we target the subject as learning the solution of MaxSAT problem, an optimization version of SAT which aims at finding a solution that maximizes the number of satisfied clauses. The advantage is that as an optimization problem, we can analyze the quality of solution in more detail, such as the number of satisfied clauses and the approximation ratio, while a solution of SAT provides little information that it is valid or not. It is worth mentioning that although the 2SAT problem (i.e., each clause contains two literals) is in the complexity class P, the Max2SAT problem is still NP-hard \cite{garey1976max2sat}, so even generating near-optimal solution for Max2SAT is not trivial.

Moreover, despite GNNs have shown their effectiveness on SAT problem through experiments, the theoretical analysis about why they can work is still absent so far. Recently, there have been some theoretical research about learning to solve combinatorial problems with GNNs. \cite{xu2019gin} proves that the discriminative power of many popular GNN variants, such as Graph Convolutional Networks (GCN) and GraphSAGE, is not sufficient for the graph isomorphism problem. \cite{xu2020algoalign} proposes the algorithmic alignment theory to explain what reasoning tasks GNNs can learn well, and shows GNNs can align with dynamic programming algorithms. So, many classical reasoning tasks could be solved relatively well with a GNN model, such as visual question answering and shortest paths. However, it also raises an issue: as a polynomial-time procedure, any GNN model cannot align with an exact algorithm for NP-hard problems unless P=NP, whereas some approximation algorithms may be used to analyze the capability of GNNs to solve such problems. \cite{sato2019approximation} has employed a class of distributed local algorithms to prove that GNNs can learn to solve some combinatorial optimization problems, such as minimum dominating set and minimum vertex cover, with some approximation ratios. These theoretical contributions provide a basis for us to further study the capability of GNNs on the MaxSAT problem.

In this paper, we investigate the capability of GNNs in learning to solve MaxSAT problem both from theoretical and practical perspectives. Inspired from the algorithmic alignment theory, we design a distributed local algorithm for MaxSAT, which is guaranteed to have an approximation ratio of $1/2$. As the algorithm aligns well with a single-layer GNN, it can provide a theoretical explanation of the effect that a simple GNN can learn to solve MaxSAT problem to some extent.
More importantly, for the first time, this work leverages GNNs to learn the solution of MaxSAT problem. Specifically, we build two typical GNN models, MS-NSFG and MS-ESFG, based on summarizing the common forms proposed for SAT, to investigate the capability of GNNs from the practical perspective.
We have trained and tested both models on several random MaxSAT datasets in different settings. The experimental results demonstrate that both models have achieved pretty high accuracy on the testing sets, as well as the satisfactory generalization to larger and more difficult problems. This implies that GNNs are expected to be promising alternatives to improve the state-of-the-art solvers. We are hopeful that the results in this paper can provide preliminary knowledge about the capability of GNNs to solve MaxSAT problem, and become a basis for more theoretical and practical research in the future.

The contributions can be summarized as follows:
\begin{itemize}
	\item We build two typical GNN models to solve MaxSAT problem, which is the first work to predict the solution of MaxSAT problem with GNNs in an end-to-end fashion.
	\item We analyze the capability of GNNs to solve MaxSAT problem theoretically, and prove that even a single-layer GNN can achieve the approximation ratio of $1/2$.
	\item We evaluate the GNN models on several datasets of random MaxSAT instances with different distributions, and show that GNNs can achieve good performance and generalization on this task.
\end{itemize}

\begin{figure}[t]
	\centering
	\includegraphics[width=0.9\columnwidth]{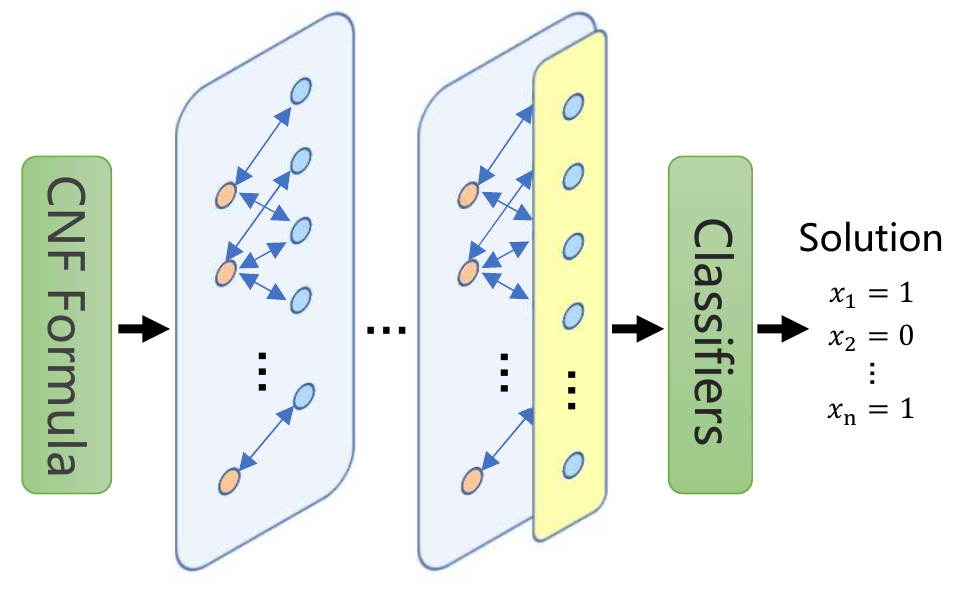} 
	\caption{The general framework of applying GNNs to solve SAT and MaxSAT problems. The CNF formula is firstly transformed to a bipartite factor graph, then each node is assigned an initial embedding. The embeddings are updated iteratively through the message passing process. Finally, a classifier decodes the assignment of each variable from its embedding.}
	\label{fig1}
\end{figure}

\section{Preliminaries}

In this section, we firstly give the definition of SAT and MaxSAT problems, and then give an introduction to graph neural networks.

\subsection{SAT and MaxSAT}
Given a set of Boolean variables $\{x_1, x_2, \dots, x_n\}$, a \emph{literal} is either a variable $x_i$ or its negation $\neg x_i$. A \emph{clause} is a disjunction of literals, such as $C_1:=x_1 \vee \neg x_2 \vee x_3$. A propositional logic formula in \emph{conjunctive normal form (CNF)} is a conjunction of clauses, such as $F:=C_1 \wedge C_2 \wedge C_3$. 

For a CNF formula $F$, the Boolean Satisfiability (SAT) problem is to find a truth assignment for each variable, such that all the clauses are satisfied (i.e., at least one literal in each clause is True). And the Maximum Satisfiability (MaxSAT) problem is to find a truth assignment for each variable, such that the number of satisfied clauses is maximized. SAT and MaxSAT have long been proved to be NP-hard, which means there is no polynomial-time algorithm to solve them unless P=NP. Specifically, if every clause in $F$ contains exactly $k$ literals, the problem is called `$k$SAT' or `Max$k$SAT'.

There is a close relationship between MaxSAT and SAT. Firstly, MaxSAT can be seen as an optimization version of SAT, which can provide more information when handling over-constrained problems. Besides, many solving techniques that are effective in SAT have also been adapted to MaxSAT, such as lazy data structures and variable selection heuristics. Accordingly, SAT can also benefit from the progress of MaxSAT.

\subsection{Graph Neural Networks}
Graph neural networks (GNNs) are a family of neural network architectures that operate on graphs, which have shown great power on many tasks across domains, such as protein interface prediction, recommendation systems and traffic prediction \cite{zhou2020gnnreview}. For a graph $G=\langle V,E \rangle$ where $V$ is a set of nodes and $E \subseteq V \times V$ is a set of edges, GNN models accept $G$ as input, and represent each node $v$ as an embedding vector $h_v^{(0)}$. Most popular GNN models follow the message passing process that updates the embedding of a node by aggregating the information of its neighbors iteratively. The operation of the $k$-th iteration (layer) of GNNs can be formalized as
\begin{equation}
\begin{array}{l}
s_v^{(k)}={\rm AGG}^{(k)}(\{h_u^{(k-1)}|u \in \mathcal{N}(v)\}), \\
h_v^{(k)}={\rm UPD}^{(k)}(h_v^{(k-1)}, s_v^{(k)}),
\end{array}
\label{equ1}
\end{equation}
where $h_v^{(k)}$ is the embedding vector of node $v$ after the $k$-th iteration. In the aggregating (messaging) step, a message $s_v^{(k)}$ is generated for each node $v$ by collecting the embeddings from its neighbors $\mathcal{N}(v)$. Next, in the updating (combining) step, the embedding of each node is updated combined with the message generated above. After $T$ iterations, the final embedding $h_v^{(T)}$ for each node $v$ is obtained. If the task GNNs need to handle is at the graph level such as graph classification, an embedding vector of the entire graph should be generated from that of all the nodes:
\begin{equation}
h_G={\rm READOUT}(\{h_u^{(T)}|u \in V\}).
\label{equ2}
\end{equation}
The final embeddings can be decoded into the outputs by a learnable function such as multi-layer perceptron (MLP), whether for node-level or graph-level tasks. The modern GNN variants have made different choices of aggregating, updating and readout functions, such as concatenation, summation, max-pooling and mean-pooling. A more comprehensive review of GNNs can be found in \cite{wuzh2021gnnreview}.

\begin{figure}[b]
	\centering
	\includegraphics[width=0.9\columnwidth]{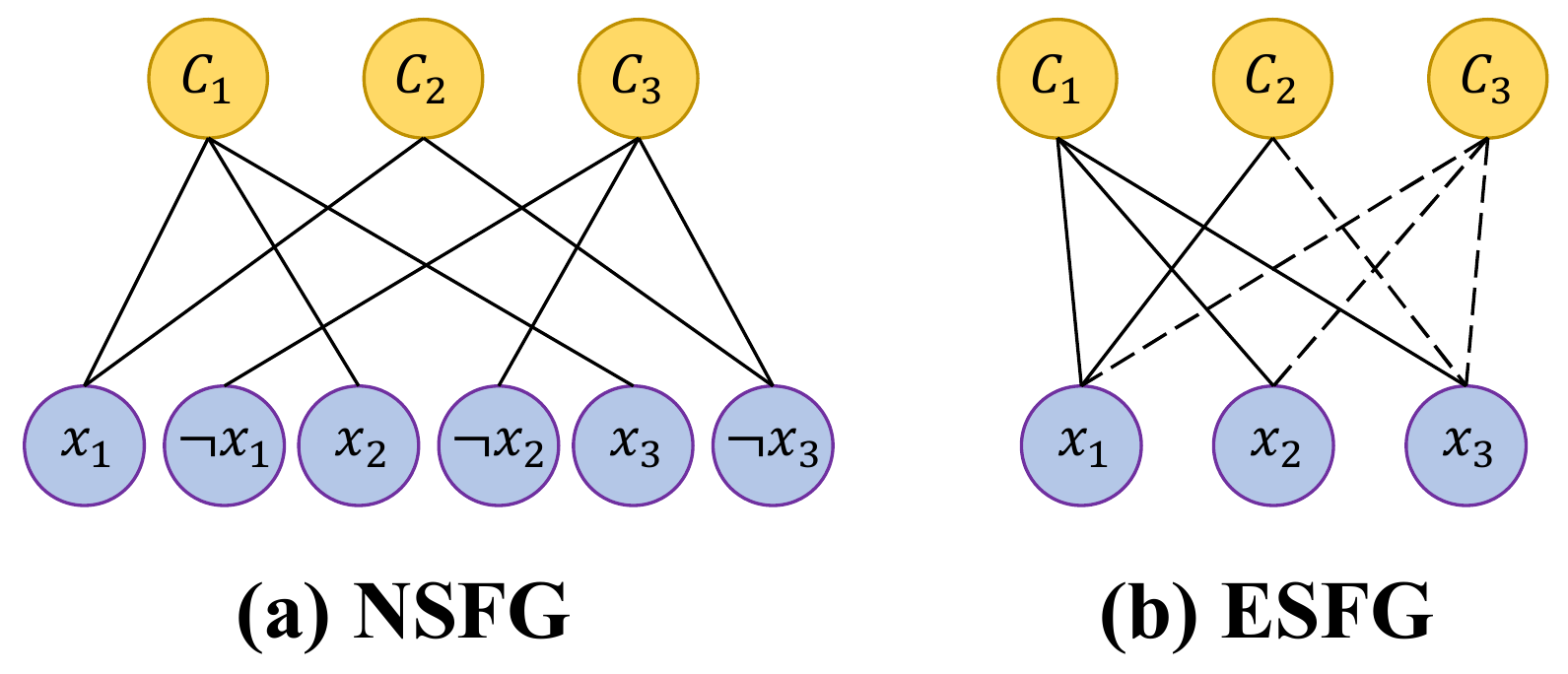} 
	\caption{Two kinds of factor graphs to represent the CNF formula $(x_1 \vee x_2 \vee x_3) \wedge (x_1 \vee \neg x_3) \wedge (\neg x_1 \vee \neg x_2 \vee \neg x_3)$ with 3 variables and 3 clauses.}
	\label{fig2}
\end{figure}

\section{GNN Models for MaxSAT}

As described in Section 1, there have been several attempts that learn to solve SAT problem with GNNs in recent years. The pipeline of such work generally consists of three parts. Firstly, a CNF formula is transformed to a graph through some rules. Next, a GNN variant is employed that maps the graph representation to the labels, e.g., the satisfiability of a problem or the assignment of a variable. Finally, the prediction results are further analyzed and utilized after the training converges. Intuitively, they can almost be transfered to work on MaxSAT problem without much modification, since these two problems have the same form of input.

Factor graph is a common representation for CNF formulas, which is a bipartite structure to represent the relationship between literals and clauses. There are mainly two kinds of factor graphs that have appeared in the previous work. The first one is node-splitting factor graph (NSFG), which splits the two literals ($x_i, \neg x_i$) corresponding to a variable $x_i$ into two nodes. NSFG is used by many work such as \cite{selsam2018learning} and \cite{zhang2020nlocalsat}. The other one is edge-splitting factor graph (ESFG), which establishes two types of edges, connected the clauses with positive and negative literals separately. \cite{nips2019learning} uses ESFG with a pair of biadjacency matrices. An example that represents a CNF formula with these two kinds of factor graphs is illustrated in Figure \ref{fig2}.

The models generally follow the message passing process of GNNs. Considering the bipartite structure of graph, in each layer the process is divided in two directions executed in sequence.
For NSFG, the operation of the $k$-th layer of GNNs can be formalized as
\begin{equation}
\begin{split}
&C_j^{(k)}={\rm UPD_C}(C_j^{(k-1)}, {\rm AGG_L}(L_i^{(k-1)}|(i,j) \in E)), \\
&(L_i^{(k)}, \widetilde{L}_i^{(k)})={\rm UPD_L}(L_i^{(k-1)}, \widetilde{L}_i^{(k-1)}, \\ 
&\mathrel{\phantom{(L_i^{(k)}, \widetilde{L}_i^{(k)})={\rm UPD_L}}}{\rm AGG_C}(C_j^{(k-1)}|(i,j) \in E)),
\end{split}
\label{equ3}
\end{equation}
where $L_i^{(k)}, C_j^{(k)}$ is the embedding of literal $i$ and clause $j$ in the $k$-th layer. The message is firstly collected from the literals in clause $j$, and the embedding of $j$ is updated by that of the last layer and the message. Next, the embedding of literal $i$ is updated in almost the same way, except that the embedding of literal $i$ is updated with that of its negation together (denoted by $\widetilde{L}_i^{(k)}$). This operation maintains the consistency of positive and negative literals of the same variable.

For ESFG, there are two types of edges, which can be denoted as $E^+$ and $E^-$. If a positive literal $i_1$ appears in clause $j$, we have $(i_1, j)\in E^+$, and similarly have $(i_2, j)\in E^-$ if $i_2$ is a negative literal.
The operation of the $k$-th layer of GNNs can be formalized as 
\begin{equation}
\begin{split}
&C_j^{(k)}={\rm UPD_C}(C_j^{(k-1)}, {\rm AGG_L^+}(L_i^{(k-1)}|(i,j) \in E^+), \\
&\mathrel{\phantom{C_j^{(k)}={\rm UPD_C}}}{\rm AGG_L^-}(L_i^{(k-1)}|(i,j) \in E^-)), \\
&L_i^{(k)}={\rm UPD_L}(L_i^{(k-1)}, {\rm AGG_C^+}(C_j^{(k-1)}|(i,j) \in E^+), \\
&\mathrel{\phantom{C_j^{(k)}={\rm UPD_C}}}{\rm AGG_C^-}(C_j^{(k-1)}|(i,j) \in E^-)).
\end{split}
\label{equ4}
\end{equation}

Finally, for both NSFG and ESFG, a binary classifier is applied to map the embedding of each variable to its assignment. We use the binary cross entropy (BCE) as loss function, which can be written as
\begin{equation}
BCE(y, p) = -(y\log(p)+(1-y)\log(1-p)),
\label{equ5}
\end{equation}
where $p\in[0,1]$ is the predicted probability of a variable being assigned True, and $y$ is the binary label from an optimal solution. By averaging the loss of each variable, we obtain the loss of a problem instance, which should be minimized.
We will investigate the performance of these two models through experiments in Section 5.

\section{Theoretical Analysis}

With the results and prospects revealed in practice, GNNs are considered to be a suitable choice in learning to solve SAT problem from benchmarks. However, the knowledge about why these models would work from a theoretical perspective is still limited so far. In this section, we are committed to explaining the possible mechanism of GNNs to solve MaxSAT problem. More specifically, we prove that a single-layer GNN can achieve an approximation ratio of $1/2$ with the help of a distributed local algorithm.

\subsection{Algorithmic Alignment}
Our theoretical analysis framework is inspired by the algorithmic alignment theory proposed by \cite{xu2020algoalign}, which provides a new point of view to understand the reasoning capability of neural networks. Formally, suppose a neural network $\mathcal{N}$ with $n$ modules $\mathcal{N}_i$, if by replacing each $\mathcal{N}_i$ with a function $f_i$ it can simulate, and $f_1,\dots,f_n$ generate a reasoning function $g$, we say $\mathcal{N}$ aligns with $g$.
As shown in that paper, even if different models such as MLPs, deep sets and GNNs have the same expressive power theoretically, GNNs tend to achieve better performance and generalization stably in the experiments. This can be explained by the fact that the computational structure of GNNs aligns well with the dynamic programming (DP) algorithms. Therefore, compared with other models, each component of GNNs only needs to approximate a simpler function, which means the algorithmic alignment can improve the sample complexity.

\subsection{Distributed Local Algorithm}
According to the algorithmic alignment theory, it is possible to reasonably infer the capability that a GNN model can achieve with the help of an algorithm aligned with its computational structure. However, this also raises an issue: as a polynomial-time procedure, any GNN model cannot align with an exact algorithm for NP-hard problems such as MaxSAT, under the assumption that P$\neq$NP. In this case, we turn to employ a weaker approximation algorithm to analyze the capability of GNNs. Although there has been a lot of research on the approximation algorithms of MaxSAT \cite{vijay2001approxbook}, it is regrettable that most of them cannot align with the structure of GNNs in an intuitive way.

In fact, there is a class of algorithms called distributed local algorithm (DLA) \cite{elkin2004distributed}, which have been found to be a good choice to align with GNNs for combinatorial problems. DLA assumes a distributed computing system that there are a set of nodes in a graph, where any two nodes do not know the existence of each other at first. The algorithm then runs in a constant number of synchronous communication rounds. In each round, a node performs local computation, and sends one message to its neighboring nodes, while receiving one from each of them. Finally, each node computes the output in terms of the information it holds. DLAs have been widely used in many applications, such as designing sublinear-time algorithms \cite{parnas2007sublinear} and controlling wireless sensor networks \cite{kubisch2003wireless}.

\subsection{Analyzing GNNs for MaxSAT}
Most DLAs are designed for solving combinatorial problems in graph theory. \cite{sato2019approximation} has employed DLAs to clarify the approximation ratios of GNNs for several NP-hard problems on graphs such as minimum dominating set and minimum vertex cover. This also encourages us to analyze the capability of GNNs to solve MaxSAT problem with the help of a DLA. As the DLA for MaxSAT has not been studied in the literature, we design an algorithm that aligns well with the message passing process described in Section 3, and the pseudo code is shown in Algorithm \ref{alg1}.

This algorithm accepts the description of a MaxSAT problem instance as input, and returns the assignment of literals, while the value of objective (i.e., the number of satisfied clauses) is easy to compute from it. Following the rules of DLA, the literals and clauses do not have any information about the problem initially.
$W$ and $S$ correspond to the states of literals and clauses, respectively. In line 4, the message is passed from \emph{literals} to \emph{clauses}, so that each clause is aware of the literals it contains, and stores the information into $S$. Next, in lines 5-8, the message is generated by \emph{clauses} and sent to \emph{literals}. Finally, the assignment of each literal is decoded from $W$ by a greedy-like method.
It can be seen that the algorithm only carries out one round of communication. It is interesting to know whether a single-layer GNN may also have some capabilities to solve the MaxSAT problem theoretically. This can be analyzed by aligning it with the algorithm.

\begin{algorithm}[tb]
	\caption{A distributed local algorithm for MaxSAT}
	\label{alg1}
	\textbf{Input}: The set of literals $L$, the set of clauses $C$ \\
	\textbf{Output}: The assignments of literals $\Phi$
	
	\begin{algorithmic}[1] 
		\STATE Set up a factor graph such as NSFG for $L$ and $C$.
		\STATE $W(L_i)\leftarrow 0$ for each $L_i \in L$.
		\STATE $S(C_j)\leftarrow \{\}$ for each $C_j \in C$.
		\STATE $S(C_j)\leftarrow S(C_j) \cup \{L_i\}$ for each edge $(L_i,C_j)$ in the factor graph.
		\FOR {each $C_j \in C$}
		\STATE $L^* \leftarrow$ Pick a literal from $S(C_j)$.
		\STATE $W(L^*)\leftarrow W(L^*)+1$.
		\ENDFOR
		\FOR {each $L_i \in L$}
		\IF {$W(L_i) >= W(\widetilde{L}_i)$}
		\STATE $\Phi(L_i)\leftarrow {\rm True}$.
		\ELSE
		\STATE $\Phi(L_i)\leftarrow {\rm False}$.
		\ENDIF
		\ENDFOR
		\STATE \textbf{return} $\Phi$
	\end{algorithmic}
\end{algorithm}

\begin{theorem}
	\label{theo1}
	There exists a single-layer GNN to solve the MaxSAT problem, which is guaranteed to have an approximation ratio of\, $1/2$. 
\end{theorem}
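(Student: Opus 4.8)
The plan is to prove Theorem~\ref{theo1} in two stages. First I would show that Algorithm~\ref{alg1} is a $1/2$-approximation algorithm for MaxSAT. Second, using the algorithmic alignment framework of \cite{xu2020algoalign}, I would exhibit a single-layer NSFG message-passing network of the form~(\ref{equ3}) whose aggregate/update/decode modules simulate, respectively, the three stages of Algorithm~\ref{alg1}; since a network that simulates Algorithm~\ref{alg1} inherits its guarantee, this produces the desired GNN.

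For the approximation bound, let $m=|C|$ and let $\mathrm{OPT}\le m$ denote the optimum. Lines~5--8 increment exactly one literal counter per clause, so $\sum_{L_i\in L}W(L_i)=m$. For a variable $x_i$, the decoding assigns True to whichever of $\{L_i,\widetilde L_i\}$ has the larger counter — fixing ties consistently (e.g.\ the positive literal wins) so that $\Phi$ is a legal assignment — hence the literal it sets to True carries weight $\ge \tfrac12(W(L_i)+W(\widetilde L_i))$. Summing over variables, $\sum_{L_i:\Phi(L_i)=\mathrm{True}}W(L_i)\ge m/2$. Call a clause \emph{happy} if the literal picked for it in line~6 is set to True; a happy clause is satisfied, and the happy clauses are exactly the disjoint union, over True literals $L_i$, of the clauses that picked $L_i$, so their count equals $\sum_{L_i:\Phi(L_i)=\mathrm{True}}W(L_i)\ge m/2$. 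Therefore the output satisfies at least $m/2\ge\mathrm{OPT}/2$ clauses. (The identical bound follows, with a minor change to the counting inequality using that the fraction of True literals in each clause is at most $1$, for the permutation-invariant variant in which each clause distributes weight $1/|C_j|$ uniformly to its literals; that variant is convenient below.)

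For the alignment, I would take initial embeddings carrying the zero states of $W$ and $S$ together with a unique identifier per node — the standard device for simulating distributed local algorithms, as in \cite{sato2019approximation}. One layer of~(\ref{equ3}) then suffices: in the literal-to-clause half-step realizing line~4, $\mathrm{AGG_L}$ is summation and $\mathrm{UPD_C}$ an MLP that decodes from the summed identifier-encodings one picked literal for $C_j$, say the one of smallest identifier, which is a well-defined function of the neighbor multiset and hence representable by sum-aggregation plus an MLP on the relevant finite domain by the multiset argument of \cite{xu2019gin}; in the clause-to-literal half-step realizing lines~5--15, each literal aggregates the picked identifiers of its clauses, counts those equal to its own to obtain $W(L_i)$, and the joint update $\mathrm{UPD_L}$ of $(L_i,\widetilde L_i)$ compares $W(L_i)$ with $W(\widetilde L_i)$, with the final binary classifier reading off $\Phi$. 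This network aligns with Algorithm~\ref{alg1}, so by the bound above it attains approximation ratio $1/2$.

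The main obstacle is this alignment step, precisely because ``pick a literal from $S(C_j)$'' is not permutation-invariant whereas GNN aggregation is; the resolution is to carry node identifiers so that a fixed rule (smallest identifier) becomes a genuine function of the input multiset, expressible by a sum-aggregator and an MLP on the bounded finite domain at hand — or, alternatively, to switch to the uniform-weight variant, which is permutation-invariant but needs the slightly different counting step noted above. Two smaller points must also be handled: that~(\ref{equ3}) as written feeds the previous-layer clause embedding into the literal update, so the simulation uses the ``executed in sequence'' reading (the just-updated $C_j^{(1)}$) consistent with the surrounding text; and that the tie in lines~9--10 must be broken so $\Phi$ never sets both $L_i$ and $\widetilde L_i$ to True.
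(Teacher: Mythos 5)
Your proposal takes essentially the same route as the paper's own proof: it uses the same two-lemma decomposition (a single-layer GNN aligns with and simulates Algorithm~\ref{alg1}, and Algorithm~\ref{alg1} is a $1/2$-approximation), and your weight-counting argument for the ratio is equivalent to the paper's Max1SAT-reduction argument that the majority-vote decoding satisfies at least $|C|/2$ clauses. The only divergences are implementation details of the simulation step (you use node identifiers with a smallest-identifier pick and the multiset-plus-MLP representability argument, whereas the paper encodes each clause as a 0-1 literal-indicator vector and invokes universal approximation with error below $1/2$, exploiting integrality of the counters $W$), and your explicit treatment of tie-breaking and of the sequencing in Eq.~(\ref{equ3}) is, if anything, slightly more careful than the paper's.
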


\begin{proof}
	We give a proof sketch here because of the space limitation, and the full proof is available in Appendix.
	Firstly, we show that there exists a single-layer GNN model that can align with the proposed distributed local algorithm (Algorithm \ref{alg1}), so that every part of the algorithm can be effectively approximated by a component of GNN. This implies that the GNN model can achieve the same performance with the algorithm when solving MaxSAT problem.
	Next, we prove that it is a $1/2$-approximation algorithm. The picking operation of literal (line 6) is equivalent to transforming the original problem into another, where each clause only contains one literal. Given a solution, the number of satisfied clauses of the original problem is at least as large as that of the new one, so we get the approximation ratio by analyzing the new Max1SAT problem. It is not hard to find that the algorithm has a guarantee that at least half of the clauses are satisfied for any Max1SAT instance.
\end{proof}

The proposed DLA can be considered as a candidate explanation for why a single-layer GNN model works and generalizes on the MaxSAT instances.
The methodologies and results may also serve as a basis for future work to make theoretical progress on this task, such as explaining the improvement of capability as the number of GNN layers increases in the experiments and finding a tighter bound.

%

\section{Experimental Evaluation}

Although the GNN models have shown their capability on many reasoning tasks such as SAT problem, the experimental evidence of whether GNNs can learn to solve MaxSAT problem is still under exploration. In order to demonstrate the capability of GNNs on this task, we firstly build two models, using NSFG and ESFG separately. After constructing the datasets with a commonly used generator for random MaxSAT instances, the GNN models are trained and tested in different settings, so that we can obtain a comprehensive understanding of their capabilities. The experimental results show that GNNs have attractive potential in learning to solve MaxSAT problem with good performance and generalization.

\subsection{Building the Models}

In Section 3, we have described the general GNN frameworks that can learn to solve MaxSAT problem. 
Based on the successful early attempts, we build two GNN models that accept a CNF formula as input, and output the assignment of literals. We abbreviate them to MS-NSFG and MS-ESFG, because they use NSFG and ESFG as the graph representation, respectively. The structures of models follow those in Eq. (\ref{equ3}) and (\ref{equ4}). Here we only describe the implementation of some key components, and the complete frameworks are shown in Appendix.

\noindent \textbf{Aggregating functions.} The implementation of aggregating functions: ${\rm AGG_L}$, ${\rm AGG_C}$ in MS-NSFG, and ${\rm AGG_L^+}$, ${\rm AGG_L^-}$, ${\rm AGG_C^+}$, ${\rm AGG_C^-}$ in MS-ESFG, consists of two steps. First, an MLP module maps the embedding of each node to a message vector. After that, the messages from relevant nodes are summed up to get the final output. For example, the function ${\rm AGG_L}$ in MS-NSFG which generates the message from literals and sends it to clause $j$, can be formalized as $\sum_{(i, j)\in E}{{\rm MLP}(L_i)}$, where $L_i$ is the embedding of literal $i$.

\noindent \textbf{Updating functions.} The updating functions: ${\rm UPD_L}$ and ${\rm UPD_C}$ in both MS-NSFG and MS-ESFG can be implemented by the LSTM module \cite{hochreiter1997lstm}. For example, to implement the function ${\rm UPD_C}$, the embedding of clause $j$ (denoted as $C_j$) is taken as the hidden state, and the message generated from aggregation is taken as the input of LSTM.

\subsection{Data Generation}
For data-driven approaches, a large number of labeled instances are necessary. So, the dataset should be easy to solve by off-the-shelf MaxSAT solvers to ensure the size of dataset, meanwhile it should also have the ability to produce larger instances in the same distribution to test the generalization. As a result, we construct the datasets of random MaxSAT instances by running a generator proposed by \cite{mitchell1992generator}, which is also used to provide benchmarks for the random track of MaxSAT competitions\footnote{https://maxsat-evaluations.github.io/}. The generator can produce CNF formulas with three parameters: the number of literals in each clause $k$, the number of variables $n$, and the number of clauses $m$.

In order to better observe the performance of GNN models, we generate multiple datasets with different distributions, which are listed in Table \ref{tab1}.
For \texttt{R2(60,600)}, \texttt{R2(60,800)} and \texttt{R3(30,300)}, we generate 20K instances, and divide them into training set, validation set and testing set according to the ratio of 8:1:1. For \texttt{R2(80,800)} and \texttt{R3(50,500)}, we only generate 2K instances as testing sets. Then we call MaxHS \cite{bacchus2020maxhs}, a state-of-the-art MaxSAT solver, to find an optimal solution for each instance as the labels of variables.
The average time MaxHS spends to solve the instances in each dataset is also presented, so that we can know about their difficulty.

\begin{table}[h]
	\def\arraystretch{1.1}
	\centering
	\begin{tabular}{ccccc}
		\hlineB{3}
		Dataset & $k$ & $n$ & $m$ & Time(s) \\ \hline
		\texttt{R2(60,600)}   & 2 & 60 & 600 & 4.01  \\
		\texttt{R2(60,800)}   & 2 & 60 & 800 & 18.21  \\
		\texttt{R2(80,800)}   & 2 & 80 & 800 & 105.19  \\
		\texttt{R3(30,300)}   & 3 & 30 & 300 & 1.92  \\
		\texttt{R3(50,500)}   & 3 & 50 & 500 & 216.26  \\ \hlineB{3}
	\end{tabular}
	\caption{The parameters and difficulty of the datasets.}
	\label{tab1}
\end{table}

\subsection{Implementation Details}
We implement the models MS-NSFG and MS-ESFG in Python, and examine their practical performance to solve MaxSAT problem\footnote{The source code is available at https://url-is-hidden.}. The models are trained by the Adam optimizer \cite{kingma2015adam}. All the experiments are running on a machine with Intel Core i7-8700 CPU (3.20GHz) and NVIDIA Tesla V100 GPU.

For reproducibility, we also summarize the setting of hyper-parameters as follows. In our configuration, the dimension of embeddings and messages $d=128$, and the learning rate is $2 \times 10^{-5}$ with a weight decay of $10^{-10}$. Unless otherwise specified, the number of GNN layers $T=20$. The instances are fed into models in batches, with each batch containing 20K nodes.

\subsection{Accuracy of Models}
We firstly evaluate the performance of both GNN models, including their convergence and the quality of predicted solution. The accuracy of prediction is reported with two values in the following paragraphs. The first one is \textbf{the gap to optimal objective}, which is the distance between the predicted objective and the corresponding optimal value. The predicted objective can be computed by counting the satisfied clauses given the predict solution. The other is \textbf{the accuracy of assignments}, which is the percentage of correctly classified variables, i.e., the assignment of a variable from the predicted solution is the same as its label. Generally, the gap to optimal objective could be a better indicator, because the optimal solution of a MaxSAT problem may not be unique.

We have trained MS-NSFG and MS-ESFG separately on three different datasets: \texttt{R2(60,600)}, \texttt{R2(60,800)} and \texttt{R3(30,300)}, and illustrate the evolution curves of accuracy throughout training on \texttt{R2(60,600)} in Figure \ref{fig3} as an example. All the models can converge within 150 epochs, and achieve pretty good performance. The average gaps to optimal objectives are less than 2 clauses for all the models, with the approximation ratio $>$99.5\%. Besides, the accuracy of assignments is around 92\% (Max2SAT) and 83\% (Max3SAT). 
There is no significant difference between the two models, while MS-ESFG performs slightly better than MS-NSFG. The experimental results show that both GNN models can be used in learning to solve MaxSAT problem.

\begin{figure}[htbp]
	\centering
	\includegraphics[width=0.98\columnwidth]{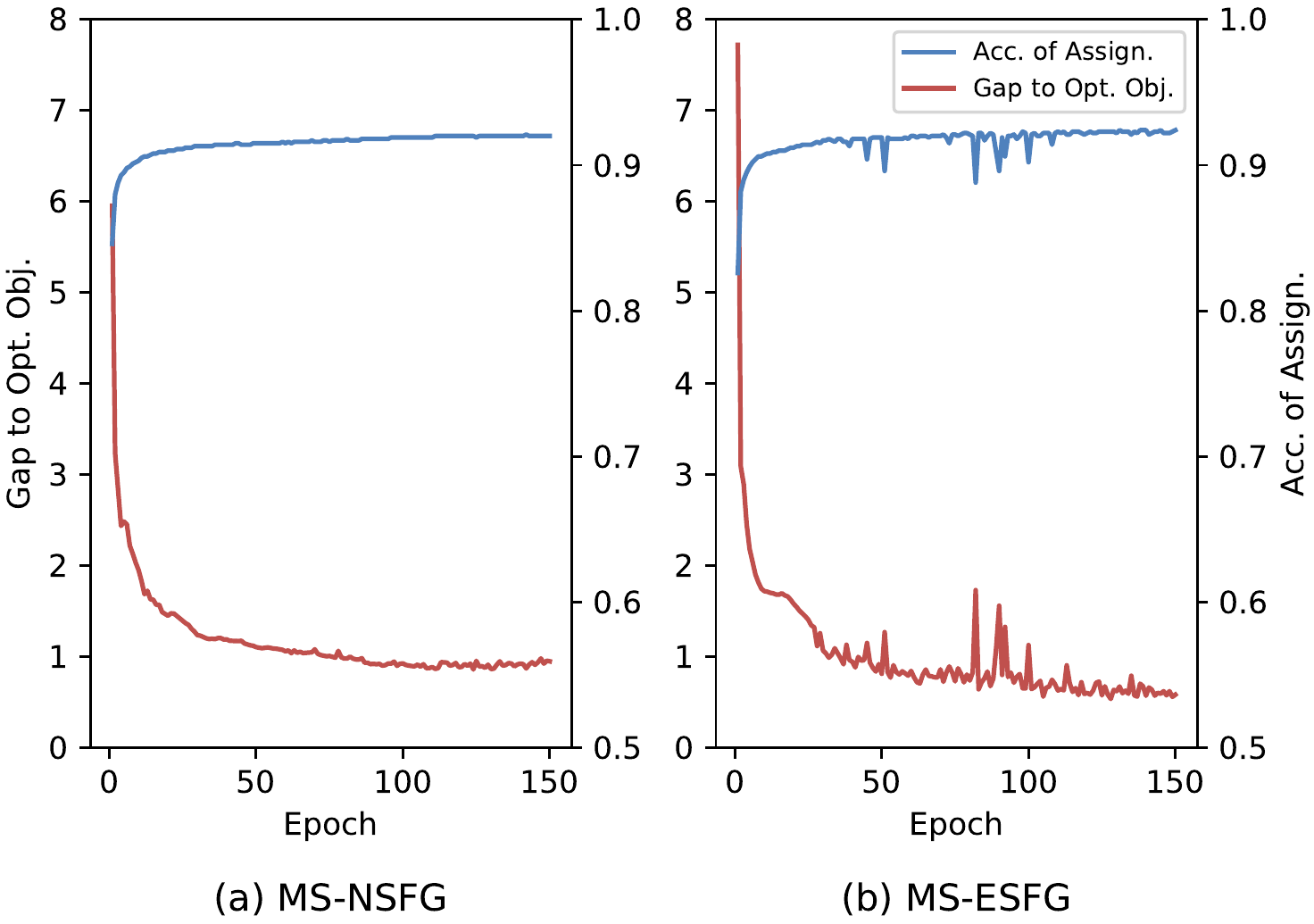} 
	\caption{The evolution of accuracy of MS-NSFG and MS-ESFG during a training process of 150 epochs on the dataset \texttt{R2(60,600)}.}
	\label{fig3}
\end{figure}

\subsection{Influence of GNN Layers}
According to the theoretical analysis, the number of GNN layers determines the information that each node can receive from its neighborhood, thereby affecting the accuracy of prediction. We examine this phenomenon from the experimental perspective by training MS-NSFG and MS-ESFG on the three datasets separately, with different hyper-parameter $T$ from 1 to 30. The changes of accuracy on testing sets are illustrated in Figure \ref{fig4}. It can be seen that when $T=1$, the capabilities of both models are weak, where the predicted objective is far from the optimal value, and the accuracy of assignments is not ideal. However, the effectiveness of both models has been significantly improved when $T=5$. If we increase $T$ to 20, the number of clauses satisfied by the predicted solution is only one less than the optimal value on average. We have continued to increase $T$ to 60, but no obvious improvement occurred. This indicates that increasing the number of layers -- within an appropriate range -- will improve the capability of GNN models.

\begin{figure}[htbp]
	\centering
	\includegraphics[width=0.98\columnwidth]{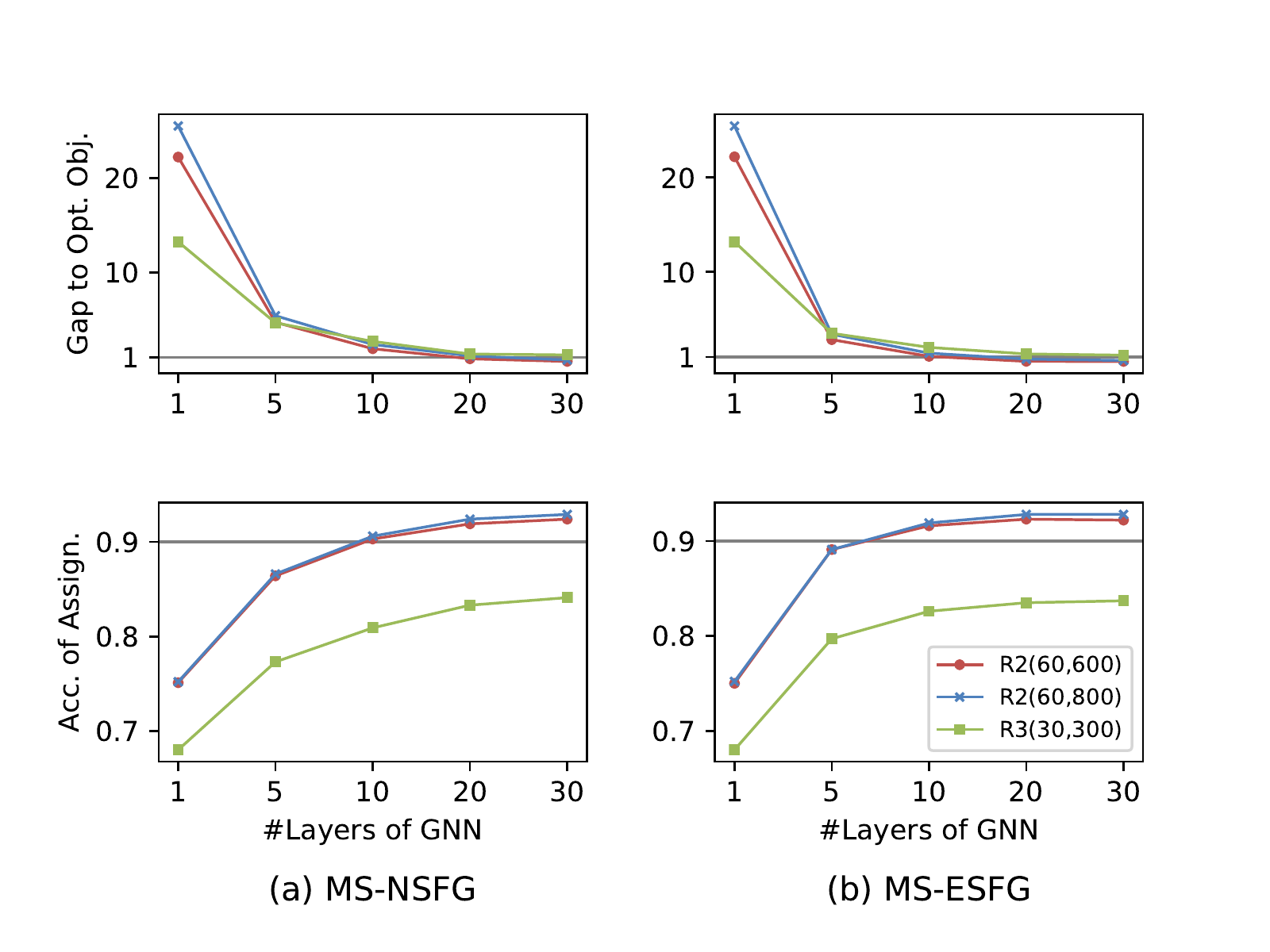} 
	\caption{The accuracy of MS-NSFG and MS-ESFG trained with different number of GNN layers.}
	\label{fig4}
\end{figure}

\subsection{Generalizing to Other Distributions}
Generalization is an important factor in evaluating the possibility to apply GNN models to solve those harder problems. We make the predictions by MS-NSFG and MS-ESFG on the testing sets with different distributions from the training sets, and the results are shown in Table \ref{tab1} and \ref{tab2}, respectively. The first column is the dataset used for training the model, and the first row is the testing set. For each pair of training and testing datasets, we report the gap to optimal objective together with the approximation ratio in the brackets above, and the accuracy of assignments below. Here, the approximation ratio is defined as the ratio of predicted and optimal objectives.

Here we mainly focus on three kinds of generalization. The first is generalizing to the datasets with \textbf{different clause-variable proportion}. From the results, both models trained on \texttt{R2(60,600)} and tested on \texttt{R2(60,800)} (or vice versa) can maintain almost the same prediction accuracy. The second is generalizing to \textbf{larger and more difficult problems}. We use two testing sets, \texttt{R2(80,800)} and \texttt{R3(50,500)}, where the number of variables is larger than those appeared in the training sets, as well as more difficult to solve by MaxHS. It can be found that both models trained on Max2SAT datasets can generalize to work on \texttt{R2(80,800)} with a satisfactory accuracy. This also holds when the training set is \texttt{R3(30,300)} and the testing set is \texttt{R3(50,500)}, which implies that GNN models are expected to be promising alternatives to help solve those difficult MaxSAT problems. The last is generalizing to \textbf{other datasets with different parameter $k$}. For example, the model is trained on Max2SAT but tested on Max3SAT problems. The results show that both models have limitations under this condition, since they cannot achieve an accuracy of assignments $>$80\% on every pair of training and testing sets, and the gap to optimal objective is not close enough, especially when trained on \texttt{R3(30,300)} and tested on Max2SAT datasets.

\begin{table}[t]
	\centering
	\def\arraystretch{1.1}
	\resizebox{.95\columnwidth}{!}{
		\begin{tabular}{c|ccc|cc}
			\hlineB{3}
			\rule{0pt}{10pt} Train $\backslash$ Test & \texttt{R2(60,600)} & \texttt{R2(60,800)} & \texttt{R3(30,300)} & \texttt{R2(80,800)} & \texttt{R3(50,500)} \\ \hlineB{2}
			\multirow{2}{*}{\texttt{R2(60,600)}} & \textbf{0.86 (99.8\%)}  & 1.42 (99.7\%)  & 3.86 (98.6\%)  & \textbf{1.15 (99.8\%)} & 6.77 (98.5\%) \\
			& \textbf{91.9\%}  & 91.9\%  & 76.1\%  & \textbf{92.0\%} & 75.4\% \\ \hline
			\multirow{2}{*}{\texttt{R2(60,800)}} & 1.19 (99.7\%)  & \textbf{1.17 (99.8\%)}  & 4.59 (98.4\%)  & 1.50 (99.7\%) & 7.96 (98.3\%) \\
			& 91.7\%  & \textbf{92.4\%}  & 75.8\%  & 91.6\% & 75.1\% \\ \hline
			\multirow{2}{*}{\texttt{R3(30,300)}} & 20.58 (96.0\%)  & 29.10 (95.7\%) & \textbf{1.37 (99.5\%)}  & 27.92 (95.9\%) & \textbf{2.16 (99.5\%)} \\
			& 78.1\%  & 76.0\%  & \textbf{83.3\%}  & 78.0\% & \textbf{82.2\%} \\ \hlineB{3} 
	\end{tabular}}
	\caption{The accuracy of MS-NSFG on different combinations of training and testing sets.}
	\label{tab2}
\end{table}

\begin{table}[t]
	\centering
	\def\arraystretch{1.1}
	\resizebox{.95\columnwidth}{!}{
		\begin{tabular}{c|ccc|cc}
			\hlineB{3}
			\rule{0pt}{10pt} Train $\backslash$ Test & \texttt{R2(60,600)} & \texttt{R2(60,800)} & \texttt{R3(30,300)} & \texttt{R2(80,800)} & \texttt{R3(50,500)} \\ \hlineB{2}
			\multirow{2}{*}{\texttt{R2(60,600)}} & 0.54 (99.8\%)  & 1.12 (99.8\%)  & 5.98 (97.9\%)  & 0.69 (99.9\%) & 9.92 (97.9\%) \\
			& 92.3\%  & 92.2\%  & 74.5\%  & 92.2\% & 73.9\% \\ \hline
			\multirow{2}{*}{\texttt{R2(60,800)}} & \textbf{0.48 (99.9\%)}  & \textbf{0.78 (99.8\%)}  & 6.13 (97.8\%)  & \textbf{0.62 (99.9\%)} & 10.24 (97.8\%) \\
			& \textbf{92.3\%}  & \textbf{92.8\%}  & 74.7\%  & \textbf{92.2\%} & 74.2\% \\ \hline
			\multirow{2}{*}{\texttt{R3(30,300)}} & 14.83 (97.1\%)  & 16.44 (97.5\%) & \textbf{1.32 (99.5\%)}  & 20.22 (97.0\%) & \textbf{1.98 (99.5\%)} \\
			& 78.8\%  & 79.5\%  & \textbf{83.5\%}  & 78.6\% & \textbf{82.5\%} \\ \hlineB{3} 
	\end{tabular}}
	\caption{The accuracy of MS-ESFG on different combinations of training and testing sets.}
	\label{tab3}
\end{table}

\section{Related Work}

Although the mainstream approaches for solving combinatorial problems, not limited to SAT or MaxSAT, are based on the search algorithms from symbolism, there have always been attempts trying to tackle these problems through data-driven techniques. A class of research is to integrate machine learning model in the traditional search framework, which has made progress on a number of problems such as mixed integer programming (MIP) \cite{khalil2016mip}, satisfiability modulo theories (SMT) \cite{balunovic2018smt} and quantified boolean formulas (QBF) \cite{lederman2020qbflearning}. Here, we work on building end-to-end models which do not need the aid of search algorithm. The earliest research work can be traced back to the Hopfield network \cite{hopfield1985neural} to solve TSP problem. Recently, many variants of neural networks have been proposed, which directly learn to solve combinatorial problems. \cite{vinyals2015pointer} introduces Pointer Net, a sequential model that performs well on solving TSP and convex hull problems. \cite{khalil2017learning} uses the combination of reinforcement learning and graph embedding, and learns greedy-like strategies for minimum vertex cover, maximum cut and TSP problems.

With the development of graph neural networks, there have been a series of work that uses GNN models to solve combinatorial problems. An important reason is that many of such problems are directly defined on graph, and in general the relation between variables and constraints can be naturally represented as a bipartite graph. Except for the mentioned NeuroSAT \cite{selsam2018learning} and its improvement, there have been some efforts learning to solve TSP \cite{prates2019tsp}, pseudo-Boolean \cite{liu2020pseudo} and graph coloring \cite{lemos2019graphcolor} problems with GNN-based models. The results indicate that GNNs have become increasingly appealing alternatives in solving combinatorial problems. Moreover, there are also some well-organized literature reviews on this subject, such as \cite{bengio2021ml4co}, \cite{cappart2021corgnn} and \cite{lamb2020gnn4ns}.

\section{Conclusion and Future Work}

Graph neural networks (GNNs) have been considered as a promising technique that can learn to solve combinatorial problems in the data-driven fashion, especially for the Boolean Satisfiability (SAT) problem. In this paper, we further study the quality of solution predicted by GNNs in learning to solve Maximum Satisfiability (MaxSAT) problem, both from theoretical and practical perspectives. Based on the graph construction methods in the previous work, we build two kinds of GNN models, MS-NSFG and MS-ESFG, which can predict the solution of MaxSAT problem. The models are trained and tested on randomly generated benchmarks with different distributions. The experimental results show that both models have achieved pretty high accuracy, and also satisfactory generalization to larger and more difficult instances.
In addition, this paper is the first that attempts to present an explanation of the capability of GNNs to solve MaxSAT problem from a theoretical point of view. On the basis of algorithmic alignment theory, we prove that even a single-layer GNN model can solve the MaxSAT problem with an approximation ratio of $1/2$.

We hope the results in this paper can inspire future work from multiple perspectives. A promising direction is to integrate the GNN models into a powerful search framework to handle more difficult problems in the specific domain, such as weighted and partial MaxSAT problems.
It is also interesting to further analyze the theoretical capability of multi-layer GNNs to achieve better approximation.

\bibliography{aaai22}

\begin{thebibliography}{35}
\providecommand{\natexlab}[1]{#1}

\bibitem[{Ans{\'{o}}tegui, Bonet, and Levy(2013)}]{carlos2013maxsat}
Ans{\'{o}}tegui, C.; Bonet, M.~L.; and Levy, J. 2013.
\newblock SAT-based MaxSAT algorithms.
\newblock \emph{Artif. Intell.}, 196: 77--105.

\bibitem[{Bacchus(2020)}]{bacchus2020maxhs}
Bacchus, F. 2020.
\newblock MaxHS in the 2020 MaxSat Evaluation.
\newblock \emph{MaxSAT Evaluation 2020}, 19.

\bibitem[{Balunovic, Bielik, and Vechev(2018)}]{balunovic2018smt}
Balunovic, M.; Bielik, P.; and Vechev, M.~T. 2018.
\newblock Learning to Solve {SMT} Formulas.
\newblock In Bengio, S.; Wallach, H.~M.; Larochelle, H.; Grauman, K.;
  Cesa{-}Bianchi, N.; and Garnett, R., eds., \emph{Advances in Neural
  Information Processing Systems 31: Annual Conference on Neural Information
  Processing Systems 2018, NeurIPS 2018, December 3-8, 2018, Montr{\'{e}}al,
  Canada}, 10338--10349.

\bibitem[{Bengio, Lodi, and Prouvost(2021)}]{bengio2021ml4co}
Bengio, Y.; Lodi, A.; and Prouvost, A. 2021.
\newblock Machine learning for combinatorial optimization: {A} methodological
  tour d'horizon.
\newblock \emph{Eur. J. Oper. Res.}, 290(2): 405--421.

\bibitem[{Biere et~al.(2009)Biere, Heule, van Maaren, and
  Walsh}]{biere2009handbook}
Biere, A.; Heule, M.; van Maaren, H.; and Walsh, T., eds. 2009.
\newblock \emph{Handbook of Satisfiability}, volume 185 of \emph{Frontiers in
  Artificial Intelligence and Applications}. {IOS} Press.
\newblock ISBN 978-1-58603-929-5.

\bibitem[{Cai and Zhang(2021)}]{cai2021sat}
Cai, S.; and Zhang, X. 2021.
\newblock Deep Cooperation of {CDCL} and Local Search for {SAT}.
\newblock In Li, C.; and Many{\`{a}}, F., eds., \emph{Theory and Applications
  of Satisfiability Testing - {SAT} 2021 - 24th International Conference,
  Barcelona, Spain, July 5-9, 2021, Proceedings}, volume 12831 of \emph{Lecture
  Notes in Computer Science}, 64--81. Springer.

\bibitem[{Cameron et~al.(2020)Cameron, Chen, Hartford, and
  Leyton{-}Brown}]{chris2020predicting}
Cameron, C.; Chen, R.; Hartford, J.~S.; and Leyton{-}Brown, K. 2020.
\newblock Predicting Propositional Satisfiability via End-to-End Learning.
\newblock In \emph{The Thirty-Fourth {AAAI} Conference on Artificial
  Intelligence, {AAAI} 2020, The Thirty-Second Innovative Applications of
  Artificial Intelligence Conference, {IAAI} 2020, The Tenth {AAAI} Symposium
  on Educational Advances in Artificial Intelligence, {EAAI} 2020, New York,
  NY, USA, February 7-12, 2020}, 3324--3331. {AAAI} Press.

\bibitem[{Cappart et~al.(2021)Cappart, Ch{\'{e}}telat, Khalil, Lodi, Morris,
  and Velickovic}]{cappart2021corgnn}
Cappart, Q.; Ch{\'{e}}telat, D.; Khalil, E.~B.; Lodi, A.; Morris, C.; and
  Velickovic, P. 2021.
\newblock Combinatorial Optimization and Reasoning with Graph Neural Networks.
\newblock In Zhou, Z., ed., \emph{Proceedings of the Thirtieth International
  Joint Conference on Artificial Intelligence, {IJCAI} 2021, Virtual Event /
  Montreal, Canada, 19-27 August 2021}, 4348--4355. ijcai.org.

\bibitem[{Cook(1971)}]{cook1971complexity}
Cook, S.~A. 1971.
\newblock The Complexity of Theorem-Proving Procedures.
\newblock In Harrison, M.~A.; Banerji, R.~B.; and Ullman, J.~D., eds.,
  \emph{Proceedings of the 3rd Annual {ACM} Symposium on Theory of Computing,
  May 3-5, 1971, Shaker Heights, Ohio, {USA}}, 151--158. {ACM}.

\bibitem[{Elkin(2004)}]{elkin2004distributed}
Elkin, M. 2004.
\newblock Distributed approximation: a survey.
\newblock \emph{{SIGACT} News}, 35(4): 40--57.

\bibitem[{Garey, Johnson, and Stockmeyer(1976)}]{garey1976max2sat}
Garey, M.~R.; Johnson, D.~S.; and Stockmeyer, L.~J. 1976.
\newblock Some Simplified NP-Complete Graph Problems.
\newblock \emph{Theor. Comput. Sci.}, 1(3): 237--267.

\bibitem[{Hochreiter and Schmidhuber(1997)}]{hochreiter1997lstm}
Hochreiter, S.; and Schmidhuber, J. 1997.
\newblock Long Short-Term Memory.
\newblock \emph{Neural Comput.}, 9(8): 1735--1780.

\bibitem[{Hopfield and Tank(1985)}]{hopfield1985neural}
Hopfield, J.~J.; and Tank, D.~W. 1985.
\newblock “Neural” computation of decisions in optimization problems.
\newblock \emph{Biological cybernetics}, 52(3): 141--152.

\bibitem[{Khalil et~al.(2016)Khalil, Bodic, Song, Nemhauser, and
  Dilkina}]{khalil2016mip}
Khalil, E.~B.; Bodic, P.~L.; Song, L.; Nemhauser, G.~L.; and Dilkina, B. 2016.
\newblock Learning to Branch in Mixed Integer Programming.
\newblock In Schuurmans, D.; and Wellman, M.~P., eds., \emph{Proceedings of the
  Thirtieth {AAAI} Conference on Artificial Intelligence, February 12-17, 2016,
  Phoenix, Arizona, {USA}}, 724--731. {AAAI} Press.

\bibitem[{Khalil et~al.(2017)Khalil, Dai, Zhang, Dilkina, and
  Song}]{khalil2017learning}
Khalil, E.~B.; Dai, H.; Zhang, Y.; Dilkina, B.; and Song, L. 2017.
\newblock Learning Combinatorial Optimization Algorithms over Graphs.
\newblock In Guyon, I.; von Luxburg, U.; Bengio, S.; Wallach, H.~M.; Fergus,
  R.; Vishwanathan, S. V.~N.; and Garnett, R., eds., \emph{Advances in Neural
  Information Processing Systems 30: Annual Conference on Neural Information
  Processing Systems 2017, December 4-9, 2017, Long Beach, CA, {USA}},
  6348--6358.

\bibitem[{Kingma and Ba(2015)}]{kingma2015adam}
Kingma, D.~P.; and Ba, J. 2015.
\newblock Adam: {A} Method for Stochastic Optimization.
\newblock In Bengio, Y.; and LeCun, Y., eds., \emph{3rd International
  Conference on Learning Representations, {ICLR} 2015, San Diego, CA, USA, May
  7-9, 2015, Conference Track Proceedings}.

\bibitem[{Kubisch et~al.(2003)Kubisch, Karl, Wolisz, Zhong, and
  Rabaey}]{kubisch2003wireless}
Kubisch, M.; Karl, H.; Wolisz, A.; Zhong, L.~C.; and Rabaey, J.~M. 2003.
\newblock Distributed algorithms for transmission power control in wireless
  sensor networks.
\newblock In \emph{2003 {IEEE} Wireless Communications and Networking, {WCNC}
  2003, New Orleans, LA, USA, 16-20 March, 2003}, 558--563. {IEEE}.

\bibitem[{Lamb et~al.(2020)Lamb, d'Avila Garcez, Gori, Prates, Avelar, and
  Vardi}]{lamb2020gnn4ns}
Lamb, L.~C.; d'Avila Garcez, A.~S.; Gori, M.; Prates, M. O.~R.; Avelar, P.
  H.~C.; and Vardi, M.~Y. 2020.
\newblock Graph Neural Networks Meet Neural-Symbolic Computing: {A} Survey and
  Perspective.
\newblock In Bessiere, C., ed., \emph{Proceedings of the Twenty-Ninth
  International Joint Conference on Artificial Intelligence, {IJCAI} 2020},
  4877--4884. ijcai.org.

\bibitem[{Lederman et~al.(2020)Lederman, Rabe, Seshia, and
  Lee}]{lederman2020qbflearning}
Lederman, G.; Rabe, M.~N.; Seshia, S.; and Lee, E.~A. 2020.
\newblock Learning Heuristics for Quantified Boolean Formulas through
  Reinforcement Learning.
\newblock In \emph{8th International Conference on Learning Representations,
  {ICLR} 2020, Addis Ababa, Ethiopia, April 26-30, 2020}. OpenReview.net.

\bibitem[{Lemos et~al.(2019)Lemos, Prates, Avelar, and
  Lamb}]{lemos2019graphcolor}
Lemos, H.; Prates, M. O.~R.; Avelar, P. H.~C.; and Lamb, L.~C. 2019.
\newblock Graph Colouring Meets Deep Learning: Effective Graph Neural Network
  Models for Combinatorial Problems.
\newblock In \emph{31st {IEEE} International Conference on Tools with
  Artificial Intelligence, {ICTAI} 2019, Portland, OR, USA, November 4-6,
  2019}, 879--885. {IEEE}.

\bibitem[{Liu et~al.(2020)Liu, Zhang, Huang, Niu, Ma, and
  Zhang}]{liu2020pseudo}
Liu, M.; Zhang, F.; Huang, P.; Niu, S.; Ma, F.; and Zhang, J. 2020.
\newblock Learning the Satisfiability of Pseudo-Boolean Problem with Graph
  Neural Networks.
\newblock In Simonis, H., ed., \emph{Principles and Practice of Constraint
  Programming - 26th International Conference, {CP} 2020, Louvain-la-Neuve,
  Belgium, September 7-11, 2020, Proceedings}, volume 12333 of \emph{Lecture
  Notes in Computer Science}, 885--898. Springer.

\bibitem[{Mitchell, Selman, and Levesque(1992)}]{mitchell1992generator}
Mitchell, D.~G.; Selman, B.; and Levesque, H.~J. 1992.
\newblock Hard and Easy Distributions of {SAT} Problems.
\newblock In Swartout, W.~R., ed., \emph{Proceedings of the 10th National
  Conference on Artificial Intelligence, San Jose, CA, USA, July 12-16, 1992},
  459--465. {AAAI} Press / The {MIT} Press.

\bibitem[{Parnas and Ron(2007)}]{parnas2007sublinear}
Parnas, M.; and Ron, D. 2007.
\newblock Approximating the minimum vertex cover in sublinear time and a
  connection to distributed algorithms.
\newblock \emph{Theor. Comput. Sci.}, 381(1-3): 183--196.

\bibitem[{Prates et~al.(2019)Prates, Avelar, Lemos, Lamb, and
  Vardi}]{prates2019tsp}
Prates, M. O.~R.; Avelar, P. H.~C.; Lemos, H.; Lamb, L.~C.; and Vardi, M.~Y.
  2019.
\newblock Learning to Solve NP-Complete Problems: {A} Graph Neural Network for
  Decision {TSP}.
\newblock In \emph{The Thirty-Third {AAAI} Conference on Artificial
  Intelligence, {AAAI} 2019, The Thirty-First Innovative Applications of
  Artificial Intelligence Conference, {IAAI} 2019, The Ninth {AAAI} Symposium
  on Educational Advances in Artificial Intelligence, {EAAI} 2019, Honolulu,
  Hawaii, USA, January 27 - February 1, 2019}, 4731--4738. {AAAI} Press.

\bibitem[{Sato, Yamada, and Kashima(2019)}]{sato2019approximation}
Sato, R.; Yamada, M.; and Kashima, H. 2019.
\newblock Approximation Ratios of Graph Neural Networks for Combinatorial
  Problems.
\newblock In Wallach, H.~M.; Larochelle, H.; Beygelzimer, A.;
  d'Alch{\'{e}}{-}Buc, F.; Fox, E.~B.; and Garnett, R., eds., \emph{Advances in
  Neural Information Processing Systems 32: Annual Conference on Neural
  Information Processing Systems 2019, NeurIPS 2019, December 8-14, 2019,
  Vancouver, BC, Canada}, 4083--4092.

\bibitem[{Selsam and Bj{\o}rner(2019)}]{selsam2019neurocore}
Selsam, D.; and Bj{\o}rner, N. 2019.
\newblock Guiding High-Performance {SAT} Solvers with Unsat-Core Predictions.
\newblock In Janota, M.; and Lynce, I., eds., \emph{Theory and Applications of
  Satisfiability Testing - {SAT} 2019 - 22nd International Conference, {SAT}
  2019, Lisbon, Portugal, July 9-12, 2019, Proceedings}, volume 11628 of
  \emph{Lecture Notes in Computer Science}, 336--353. Springer.

\bibitem[{Selsam et~al.(2019)Selsam, Lamm, B{\"{u}}nz, Liang, de~Moura, and
  Dill}]{selsam2018learning}
Selsam, D.; Lamm, M.; B{\"{u}}nz, B.; Liang, P.; de~Moura, L.; and Dill, D.~L.
  2019.
\newblock Learning a {SAT} Solver from Single-Bit Supervision.
\newblock In \emph{7th International Conference on Learning Representations,
  {ICLR} 2019, New Orleans, LA, USA, May 6-9, 2019}. OpenReview.net.

\bibitem[{Vazirani(2001)}]{vijay2001approxbook}
Vazirani, V.~V. 2001.
\newblock \emph{Approximation algorithms}.
\newblock Springer.
\newblock ISBN 978-3-540-65367-7.

\bibitem[{Vinyals, Fortunato, and Jaitly(2015)}]{vinyals2015pointer}
Vinyals, O.; Fortunato, M.; and Jaitly, N. 2015.
\newblock Pointer Networks.
\newblock In Cortes, C.; Lawrence, N.~D.; Lee, D.~D.; Sugiyama, M.; and
  Garnett, R., eds., \emph{Advances in Neural Information Processing Systems
  28: Annual Conference on Neural Information Processing Systems 2015, December
  7-12, 2015, Montreal, Quebec, Canada}, 2692--2700.

\bibitem[{Wu et~al.(2021)Wu, Pan, Chen, Long, Zhang, and
  Yu}]{wuzh2021gnnreview}
Wu, Z.; Pan, S.; Chen, F.; Long, G.; Zhang, C.; and Yu, P.~S. 2021.
\newblock A Comprehensive Survey on Graph Neural Networks.
\newblock \emph{{IEEE} Trans. Neural Networks Learn. Syst.}, 32(1): 4--24.

\bibitem[{Xu et~al.(2019)Xu, Hu, Leskovec, and Jegelka}]{xu2019gin}
Xu, K.; Hu, W.; Leskovec, J.; and Jegelka, S. 2019.
\newblock How Powerful are Graph Neural Networks?
\newblock In \emph{7th International Conference on Learning Representations,
  {ICLR} 2019, New Orleans, LA, USA, May 6-9, 2019}. OpenReview.net.

\bibitem[{Xu et~al.(2020)Xu, Li, Zhang, Du, Kawarabayashi, and
  Jegelka}]{xu2020algoalign}
Xu, K.; Li, J.; Zhang, M.; Du, S.~S.; Kawarabayashi, K.; and Jegelka, S. 2020.
\newblock What Can Neural Networks Reason About?
\newblock In \emph{8th International Conference on Learning Representations,
  {ICLR} 2020, Addis Ababa, Ethiopia, April 26-30, 2020}. OpenReview.net.

\bibitem[{Yolcu and P{\'{o}}czos(2019)}]{nips2019learning}
Yolcu, E.; and P{\'{o}}czos, B. 2019.
\newblock Learning Local Search Heuristics for Boolean Satisfiability.
\newblock In Wallach, H.~M.; Larochelle, H.; Beygelzimer, A.;
  d'Alch{\'{e}}{-}Buc, F.; Fox, E.~B.; and Garnett, R., eds., \emph{Advances in
  Neural Information Processing Systems 32: Annual Conference on Neural
  Information Processing Systems 2019, NeurIPS 2019, December 8-14, 2019,
  Vancouver, BC, Canada}, 7990--8001.

\bibitem[{Zhang et~al.(2020)Zhang, Sun, Zhu, Li, Cai, Xiong, and
  Zhang}]{zhang2020nlocalsat}
Zhang, W.; Sun, Z.; Zhu, Q.; Li, G.; Cai, S.; Xiong, Y.; and Zhang, L. 2020.
\newblock NLocalSAT: Boosting Local Search with Solution Prediction.
\newblock In Bessiere, C., ed., \emph{Proceedings of the Twenty-Ninth
  International Joint Conference on Artificial Intelligence, {IJCAI} 2020},
  1177--1183. ijcai.org.

\bibitem[{Zhou et~al.(2020)Zhou, Cui, Hu, Zhang, Yang, Liu, Wang, Li, and
  Sun}]{zhou2020gnnreview}
Zhou, J.; Cui, G.; Hu, S.; Zhang, Z.; Yang, C.; Liu, Z.; Wang, L.; Li, C.; and
  Sun, M. 2020.
\newblock Graph neural networks: {A} review of methods and applications.
\newblock \emph{{AI} Open}, 1: 57--81.

\end{thebibliography}

\newpage
\appendix

\section*{Appendix}
\bigskip
\subsection{Proof of Theorem 1}

A CNF formula $\phi$ can be represented as a pair $(L,C)$, where $L$ is the set of literals, and $C$ is the set of clauses. Let $\mathcal{A}$ be the proposed distributed local algorithm for MaxSAT problem.
Fist of all, we present the following two lemmas:

\begin{lemma}
\label{lem1}
Given the algorithm $\mathcal{A}$, there exists a single-layer graph neural network $\mathcal{N}$, such that for any input $\phi=(L,C)$, $\mathcal{A}(\phi)=\mathcal{N}(\phi)$ holds.
\end{lemma}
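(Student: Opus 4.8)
The plan is to construct an explicit single-layer GNN on the NSFG whose three components reproduce, line by line, the three phases of Algorithm~\ref{alg1}: the construction of the sets $S(C_j)$ (line~4), the pick-and-count phase (lines~5--8), and the greedy decoding (lines~9--15). Because MS-NSFG is by definition a message-passing network on the NSFG, the structural alignment with line~1 is automatic, so the only work is to choose the aggregation, update, and classifier functions so that the numerical state of $\mathcal{A}$ is carried inside the node embeddings of $\mathcal{N}$.

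First I would fix an embedding layout: each clause node $C_j$ keeps a vector that, after the first directional sub-step of Eq.~(\ref{equ3}), encodes the multiset $S(C_j)$ accurately enough to pin down the picked literal $L^*$, and each literal node $L_i$ reserves a coordinate for the running value $W(L_i)$ together with a copy of $W(\widetilde L_i)$, the latter being available since $\mathrm{UPD_L}$ updates a literal jointly with its negation. I would then instantiate the layer as follows. (i) The literal-to-clause aggregation is plain summation of per-literal messages, so $C_j$ recovers an encoding of its incident literals, realizing line~4. (ii) $\mathrm{UPD_C}$ applies the deterministic ``pick a literal from $S(C_j)$'' rule and stores the choice in the clause embedding; in the second directional sub-step this produces a message equal to $1$ along the edge to $L^*$ and $0$ along the other incident edges. (iii) The clause-to-literal aggregation sums these $0/1$ messages, so literal $L_i$ ends up holding $W(L_i)$, the number of clauses that picked it, realizing lines~5--8. (iv) $\mathrm{UPD_L}$ together with the output MLP compares $W(L_i)$ with $W(\widetilde L_i)$ and thresholds the result, realizing lines~9--15 and yielding $\Phi$.

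To argue that these maps are genuinely realizable, I would note that the sum aggregations are exact GNN primitives, while each remaining ingredient — the selection rule inside $\mathrm{UPD_C}$, the comparison inside $\mathrm{UPD_L}$, and the threshold in the classifier — is a fixed function of a bounded-dimensional input and hence approximable to arbitrary accuracy by an MLP (or by the LSTM module used in the implementation) via the universal approximation theorem. Since $\mathcal{A}(\phi)$ is a discrete object, an assignment in $\{\mathrm{True},\mathrm{False}\}^{|L|}$, and the decoding is a strict threshold, a sufficiently accurate approximation forces $\mathcal{N}(\phi)=\mathcal{A}(\phi)$ exactly; composing the two directional sub-steps gives one GNN layer in the sense of Eq.~(\ref{equ3}), which establishes the claim.

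The main obstacle is step~(ii): a standard GNN update is symmetric in a node's neighbors and broadcasts a single identical message, so a bare clause node cannot single out one incident literal and direct feedback only to it. I would resolve this exactly as distributed local algorithms do, by equipping the incident edges with a port numbering (equivalently, distinct identifiers) — which is also what makes ``pick a literal from $S(C_j)$'' well defined in Algorithm~\ref{alg1} to begin with — and then checking that one layer still suffices, i.e.\ that no extra round is needed for a literal to learn it was the selected one, which holds because the port label of an edge is visible symmetrically at both of its endpoints.
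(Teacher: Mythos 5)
Your proposal is correct in substance and follows the same overall template as the paper's proof: align each phase of Algorithm~\ref{alg1} with one component of a single-layer message-passing step (sum aggregation for line~4, a learned update for the pick-and-count phase, a comparison-plus-threshold for the decoding), then invoke universal approximation together with the discreteness of the output to turn approximate simulation into exact equality. The genuine difference is in how the picked literal learns it was picked. You have the clause emit a non-uniform, edge-specific $0/1$ message (1 to $L^*$, 0 to the others), correctly observe that this is impossible in the symmetric message-passing form of Eq.~(\ref{equ3}), and repair it by adding port numbering, which moves you into a port-numbered GNN model in the style of the distributed-local-algorithm literature rather than the MS-NSFG architecture the paper actually uses. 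The paper instead sidesteps the issue entirely: each clause embedding is a $0$-$1$ indicator vector of length $|L|$ encoding $S(C_j)$, this whole vector is broadcast identically to all incident literals, and since the picking rule is a deterministic function of $S(C_j)$, each literal can locally recompute whether it is the chosen one and increment its own counter $W(L_i)$ on the receiving side ($f_2:\mathbb{R}\times\mathbb{R}^{d}\rightarrow\mathbb{R}$). So the paper's construction stays within symmetric message passing at the cost of requiring literal identities in the embeddings, while yours requires port/edge identifiers; both are legitimate symmetry-breaking devices, but yours proves the lemma for a slightly different GNN model than the one the paper builds. One further small point: your claim that sufficient accuracy ``forces'' exactness needs the paper's integrality argument to handle ties, namely that the $W$-values are integers, so any approximation error $\epsilon<1/2$ (with the threshold placed between integers) reproduces the condition $W(L_i)\ge W(\widetilde L_i)$ exactly, including the tie case; a strict threshold at zero alone could flip ties.
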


\begin{lemma}
\label{lem2}
The algorithm $\mathcal{A}$ has an approximation ratio of $1/2$ for any input $\phi=(L,C)$.
\end{lemma}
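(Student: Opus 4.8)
The plan is to reduce the analysis of $\mathcal{A}$ to a single-literal (Max1SAT) instance, so that the $1/2$ bound falls out of an elementary counting argument. First I would observe that lines 5--8 of Algorithm~\ref{alg1} amount to choosing, for each clause $C_j\in C$, one \emph{representative literal} $L^*_j\in C_j$ (regardless of how the pick in line~6 is resolved), and that the counter $W$ ends up recording how many clauses chose each literal. Form the auxiliary instance $\phi'=(L,C')$ whose $j$-th clause is the unit clause $\{L^*_j\}$. Since $L^*_j\in C_j$, any assignment that satisfies $\{L^*_j\}$ also satisfies $C_j$; hence, writing $v_\psi(\Phi)$ for the number of clauses of $\psi$ satisfied by an assignment $\Phi$, we have $v_\phi(\Phi)\ge v_{\phi'}(\Phi)$ for \emph{every} $\Phi$, in particular for the $\Phi$ returned by $\mathcal{A}$. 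So it suffices to lower-bound $v_{\phi'}(\Phi)$.

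Next I would evaluate the counters explicitly. For each variable $x_i$, let $a_i$ (resp.\ $b_i$) be the number of representative literals equal to $x_i$ (resp.\ $\neg x_i$); by construction $W(x_i)=a_i$ and $W(\widetilde{x_i})=b_i$, and since each of the $m$ clauses contributes exactly one representative literal belonging to exactly one variable, $\sum_i(a_i+b_i)=m$. The decoding in lines 9--15 assigns $x_i$ the polarity with the larger counter (ties resolved to True), which satisfies exactly $\max(a_i,b_i)$ of the unit clauses over $x_i$, so $v_{\phi'}(\Phi)=\sum_i\max(a_i,b_i)$. Combining this with $\max(a_i,b_i)\ge (a_i+b_i)/2$ gives
\[
v_\phi(\Phi)\;\ge\;v_{\phi'}(\Phi)\;=\;\sum_i\max(a_i,b_i)\;\ge\;\tfrac12\sum_i(a_i+b_i)\;=\;\tfrac{m}{2}\;\ge\;\tfrac12\,\mathrm{OPT}(\phi),
\]
where the last inequality uses that any CNF with $m$ clauses has optimum at most $m$. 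This is exactly the claimed $1/2$-approximation guarantee.

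The only real obstacle is bookkeeping around degenerate clauses, and I expect the rest to be routine. I would handle empty clauses (where line~6 has nothing to pick) by noting they are never satisfied and can be deleted without changing either $\mathrm{OPT}(\phi)$ or $v_\phi(\Phi)$, so we may assume w.l.o.g.\ every clause is nonempty; tautological or repeated-literal clauses need no special care since a tautological clause is satisfied by every assignment and therefore only strengthens the inequality $v_\phi(\Phi)\ge v_{\phi'}(\Phi)$. Once these conventions are in place, the two displayed inequalities hold verbatim and the lemma follows; note that the argument is insensitive to the tie-breaking rule and to the arbitrary choices in line~6, so the bound is a worst-case guarantee over all runs of $\mathcal{A}$.
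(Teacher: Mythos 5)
Your proposal is correct and follows essentially the same route as the paper's proof: reduce to the Max1SAT instance formed by the picked representative literals, note that satisfying a unit clause satisfies the original clause, and use the majority decoding to satisfy at least half of the $m$ unit clauses, hence at least $\mathrm{OPT}/2$ clauses of the original. Your version merely makes the counting explicit (via $a_i,b_i$ and $\max(a_i,b_i)\ge(a_i+b_i)/2$) and adds the harmless bookkeeping about empty clauses, which the paper omits.
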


If these lemmas hold, we have found a single-layer GNN $\mathcal{N}$ that achieves the same performance as $\mathcal{A}$ when solving MaxSAT problem, which is guaranteed to have a $1/2$-approximation. As a consequence, Theorem 1 holds.

\begin{proof}[Proof of Lemma \ref{lem1}]
We use a single-layer GNN $\mathcal{N}$ to align with the algorithm $\mathcal{A}$. Let $L$ be a finite set of literals, and $d=|L|$. We assume that the embedding of each clause is a 0-1 vector of length $d$, which represents a set of literals, and the embedding of each literal $L_i$ is composed of two values: $W(L_i)$ and $W(\widetilde{L}_i)$. Then, we construct some key components of $\mathcal{N}$ to align with the operations in $\mathcal{A}$.

Consider the operation $S(C_j)\leftarrow S(C_j) \cup \{L_i\}$ (line 4). As $S(C_j)$ and $\{L_i\}$ are sets of literals, this set union operation is equivalent to a logical AND function of two 0-1 vectors, which is apparently linearly separable. So there exists a learnable function $f_1: \mathbb{R}^{d} \times \mathbb{R}^d \rightarrow \mathbb{R}^d$ to simulate the operation exactly.

Consider the operation of picking a literal $L^*$ from $S(C_j)$, and $W(L^*)\leftarrow W(L^*)+1$ (lines 6-7). According to the universal approximation theorem, there exists a learnable function $f_2: \mathbb{R} \times \mathbb{R}^{d} \rightarrow \mathbb{R}$ to approximate this operation with arbitrarily small error $\epsilon$. Next, we show the error will not break the exact simulation of the assignment operation (lines 10-14). When $W(L_i) \ne W(\widetilde{L}_i)$, the condition $W(L_i) \ge W(\widetilde{L}_i)$ still holds if $\epsilon<0.5$, since the elements in $W$ are integers. Besides, when $W(L_i)=W(\widetilde{L}_i)$, either $L_i$ or $\widetilde{L}_i$ can be assigned True. So there exists an $f_2$ with $\epsilon<0.5$ to simulate the operations exactly.

The alignment and simulation of other parts are straightforward. As shown above, each component of $\mathcal{N}$ can align with an operation in $\mathcal{A}$. Therefore, for any input $\phi=(L,C)$, their outputs must be equal, i.e., $\mathcal{A}(\phi)=\mathcal{N}(\phi)$.
\end{proof}

\begin{proof}[Proof of Lemma \ref{lem2}]
In the algorithm $\mathcal{A}$, the picking operation of literal (line 6) transforms the original MaxSAT problem \emph{$P$} into a new Max1SAT problem \emph{$P'$}, where $W(L_i)$ counts the number of occurrences of literal $L_i$ in \emph{$P'$}. Given a solution of \emph{$P'$}, the number of satisfied clauses of \emph{$P$} is at least as large as that of \emph{$P'$}. Then, we consider the assignment operation of literal (lines 9-15) in $\mathcal{A}$. For a literal $L_i$ appearing in \emph{$P'$}, according to the condition $W(L_i) \ge W(\widetilde{L}_i)$, the satisfied clauses are no less than the rejected ones if $L_i$ is assigned True. Therefore, a solution produced by $\mathcal{A}$ satisfies at least $|C|/2$ clauses, which implies an approximation ratio of $1/2$.

\end{proof}

\subsection{Frameworks of Models}

The frameworks of MS-NSFG and MS-ESFG are shown below. The embeddings of literals and clauses in the $k$-th layer are denoted as $L^{(k)}$ and $C^{(k)}$, respectively. $d$ is the dimension of embeddings, and $T$ is the number of GNN layers.

\begin{algorithm}[htb]
	\caption{MS-NSFG}
	\label{msnsfg}
	\textbf{Input}: An NSFG $G=\langle V_L,V_C,E \rangle$ \\
	\textbf{Output}: The predicted assignments of literals $\Phi$
	
	\begin{algorithmic}[1] 
		\STATE Build the adjacency matrix $M$ of graph $G$;
		\STATE Initialize $L^{(0)} \in \mathbb{R}^{|V_L|\times d} \sim U(0,1)$;
		\STATE Initialize $C^{(0)} \in \mathbb{R}^{|V_C|\times d} \sim U(0,1)$;
		\FOR {$k$ from 1 to $T$}
		\STATE $C^{(k)}\leftarrow{\rm UPD_C}(C^{(k-1)}, M\cdot{\rm AGG_L}(L^{(k-1)}))$;
		\STATE $L^{(k)}\leftarrow{\rm UPD_L}(L^{(k-1)}, M^{\mathsf{T}}\cdot{\rm AGG_C}(C^{(k-1)}))$;
		\ENDFOR
		\STATE $L_{pred} \in \mathbb{R}^{|V_L|} \leftarrow {\rm PRED_L}(L^{(T)})$;
		\STATE $\Phi\leftarrow {\rm Round}({\rm Sigmoid}(L_{pred}))$;
		\STATE \textbf{return} $\Phi$
	\end{algorithmic}
\end{algorithm}

\begin{algorithm}[htb]
	\caption{MS-ESFG}
	\label{msesfg}
	\textbf{Input}: An ESFG $G=\langle V_L,V_C,E^+,E^- \rangle$ \\
	\textbf{Output}: The predicted assignments of literals $\Phi$
	
	\begin{algorithmic}[1] 
		\STATE Build the adjacency matrices $M^+,M^-$ of graph $G$;
		\STATE Initialize $L^{(0)} \in \mathbb{R}^{|V_L|\times d} \sim U(0,1)$;
		\STATE Initialize $C^{(0)} \in \mathbb{R}^{|V_C|\times d} \sim U(0,1)$;
		\FOR {$k$ from 1 to $T$}
		\STATE $C^{(k)}\leftarrow{\rm UPD_C}(C^{(k-1)},\; M^+ \cdot {\rm AGG_L^+}(L^{(k-1)}) + M^- \cdot {\rm AGG_L^-}(L^{(k-1)}))$;
		\STATE $L^{(k)}\leftarrow{\rm UPD_L}(L^{(k-1)},\; (M^+)^{\mathsf{T}} \cdot {\rm AGG_C^+}(C^{(k-1)}) + (M^-)^{\mathsf{T}} \cdot {\rm AGG_C^-}(C^{(k-1)}))$;
		\ENDFOR
		\STATE $L_{pred} \in \mathbb{R}^{|V_L|} \leftarrow {\rm PRED_L}(L^{(T)})$;
		\STATE $\Phi\leftarrow {\rm Round}({\rm Sigmoid}(L_{pred}))$;
		\STATE \textbf{return} $\Phi$
	\end{algorithmic}
\end{algorithm}

\end{document}